\newtheorem{theorem}{Theorem}
\newcommand*{\QEDA}{\hfill\ensuremath{\blacksquare}}%
\newcommand{\KL}{\mathtt{KL}}
\newcommand{\calL}{\mathcal{L}}
\newcommand{\xbold}{{x}}
\newcommand{\ubold}{{u}}
\newcommand{\goal}{{g}}
\newcommand{\start}{{s}}
\newcommand{\nominal}{{\bar{u}}}
\newcommand{\tron}{\textsc{Tron}}
\newcommand{\ilqr}{\textsc{Ilqr}}
\newcommand{\admm}{\textsc{Admm}}
\definecolor{darkgreen}{rgb}{0,0.5,0}
\definecolor{darkred}{rgb}{0.7,0,0}
\definecolor{teal}{rgb}{0.3,0.8,0.8}
\definecolor{orange}{rgb}{1.0,0.5,0.0}
\definecolor{purple}{rgb}{0.8,0.0,0.8}
\newcommand{\kibitz}[2]{\ifnum\Comments=1{\textcolor{#1}{\textsf{\footnotesize #2}}}\fi}
\title{\LARGE \bf \tron{}: A Fast Solver for Trajectory Optimization with Non-Smooth Cost Functions}
\author{Anirudh Vemula$^{1}$ and J. Andrew Bagnell$^{2}$
  \thanks{$^{1}$Robotics Institute, Carnegie Mellon University {\tt\small vemula@cmu.edu}}
  \thanks{$^{2}$Aurora Innovation {\tt\small dbagnell@ri.cmu.edu}}
}
\begin{document}

\maketitle

\begin{abstract}
  Trajectory optimization is an important tool for control and
  planning of complex, underactuated robots, and has shown impressive
  results in real world robotic tasks. However, in applications where the cost function
  to be optimized is non-smooth, modern trajectory
  optimization methods have extremely slow convergence.
  In this work, we present \tron{}, an iterative solver that can be used for efficient trajectory
  optimization in applications with non-smooth cost
  functions that are composed of smooth components.
  \tron{} achieves this by exploiting the structure of the objective
  to adaptively
  smooth the cost function, resulting in a sequence of objectives
  that can be efficiently optimized.
  \tron{} is provably guaranteed to converge to
  the global optimum of the non-smooth convex cost function when the dynamics are linear,
  and to a stationary point when the dynamics are nonlinear.
  Empirically, we show that
  \tron{} has faster convergence and lower final costs when compared to
  other trajectory optimization methods on a range of simulated 
  tasks including collision-free motion planning for a
  mobile robot, sparse
  optimal control for surgical needle, and a satellite rendezvous problem.
\end{abstract}

\section{Introduction}
\label{sec:introduction}

Trajectory optimization is a general framework that can be used to
synthesize dynamic motions for robots with complex nonlinear dynamics
by computing feasible state and control sequences that minimize a
cost function while satisfying constraints \cite{trajopt,chomp}. Most of the
existing methods in this framework exploit the
differentiability (or smoothness) properties of the cost function to be
optimized. However, many realistic applications require the use of
cost functions that are not smooth. For example, consider the
task of computing an optimal control sequence for steering an
autonomous car.
A control sequence for the steering that is not sparse is undesirable,
as it results
in steering behavior that does not mimic a human driver who tend to
have sparse controls.
This could deteriorate the driving
experience for the passenger.
Traditionally, sparsity is enforced in
optimization by penalizing the L1-norm \cite{lasso} of the control, which makes the
resulting cost function non-differentiable. Other examples of
non-smooth cost functions include
minimum-fuel \cite{fuel}, and minimum-time \cite{time}
objectives. Thus, there are a broad range of applications in robotics
and other scientific domains which require the use of
non-smooth cost functions.

Unfortunately, modern trajectory optimization methods have extremely slow convergence when
dealing with non-smooth cost functions \cite{Vossen}. Previous work
has tackled this challenge by smoothing the cost function and
optimizing the smoothed objective \cite{chomp,
  van2014iterated}. This results in convergence to a trajectory whose
suboptimality is heavily dependent on the extent of smoothness
introduced. Vossen and Maurer
\cite{Vossen} introduced an approach specific to L1-norm 
objectives using regularization and augmentation techniques, but it is
only applicable to objectives that are linear in the controls. More
recently, Le Cleac’h and Manchester~\cite{lefast} proposed a method
based on ADMM~\cite{DBLP:journals/ftml/BoydPCPE11}
specifically tackling the L1-norm problem that can handle general nonlinear
dynamics and constraints. However their approach does
not exploit the structure of L1-norm objective and exhibits slow
convergence, as shown in our experiments.

We present \tron{}, an iterative solver that is applicable to a broad family of
non-smooth cost functions, and exploits the
structure of the objective to achieve fast convergence.
More specifically, we focus on non-smooth cost functions that are
composed of smooth components.
\tron{} is very easy to implement and requires trivial modifications
to popular trajectory optimization methods such as
\textsc{Ilqr}~\cite{DBLP:conf/icinco/LiT04} and
\textsc{Ddp}~\cite{doi:10.1080/00207176608921369}.
We derive our method by
formulating the optimization problem as a two-player min-max game to construct a
sequence of adaptively smoothed 
objectives that can be efficiently optimized using modern trajectory optimization
methods.
\tron{} is provably guaranteed to
converge to the global optimum of the non-smooth convex
objective when dynamics are linear, and to a stationary point when
dynamics are nonlinear.
We show that \tron{}
exhibits fast convergence when compared with other 
trajectory optimization methods,
on a range of applications including collision-free motion planning
for a mobile robot, sparse optimal control for a surgical needle, and a
satellite rendezvous problem.

We introduce the broad family of
non-smooth cost functions that we consider in this work in
Section~\ref{sec:problem}, and present our min-max optimization
objective and a simple solution strategy in \tron{} in Section~\ref{sec:approach}. We present
a convergence analysis of \tron{} in
Section~\ref{sec:convergence-analysis} and demonstrate how \tron{} can be
applied in the context of trajectory optimization in
Section~\ref{sec:appl-traj-optim}. Finally, our experimental results
demonstrate the effectiveness of \tron{} in  Section~\ref{sec:experiments} and conclude with potential
future extensions in Section~\ref{sec:conclusion}.

\section{Problem Formulation}
\label{sec:problem}

Trajectory optimization solves the following general
problem:
\begin{equation}
  \label{eq:7}
  \tag{A}
\begin{aligned}
  \min_{x_{0:T}, u_{0:T-1}} \quad & \ell_T(x_T) + \sum_{t=0}^{T-1} \ell_t(x_t,
  u_t) \\
  \textrm{subject to} \quad & x_{t+1} = \kappa(x_t, u_t) \\
  \quad & \alpha_t(x_t, u_t) \leq 0 \\
  \quad & \beta_t(x_t, u_t) = 0
\end{aligned}
\end{equation}
where $t$ denotes the time step index, $\ell_T$ and $\ell_t$ denote
the final and $t$-th stage cost functions, $x_t$ and $u_t$ denote the state and
control of the trajectory at time step $t$, $T$ is the horizon, $\kappa(x_t, u_t)$ denotes
discrete dynamics, and $\alpha_t$ and $\beta_t$ denote inequality and equality
constraints on the state and control inputs. For simplicity of exposition, 
we assume there are no constraints on the state
and control inputs except the dynamics $x_{t+1} = \kappa(x_t,
u_t)$\footnote{\tron{} can be extended to account for additional
constraints using, e.g., augmented lagrangian
techniques~\cite{hestenes1969multiplier}.}.

In this work, we assume that the cost functions $\ell_t(x_t, u_t)$ and
$\ell_T(x_T)$ in problem~\eqref{eq:7} have the following structure:
\begin{equation}
  \label{eq:14}
  \begin{aligned}
  &\ell_t(x_t, u_t) = f_t(x_t, u_t) + \sum_{i=1}^M\max\{g_t^i(x_t,
    u_t), \bar{g}_t^i(x_t, u_t)\} \\
  &\ell_T(x_T) = f_T(x_T) + \sum_{i=1}^M\max\{g_T^i(x_T),
  \bar{g}_T^i(x_T)\}
  \end{aligned}
\end{equation}
where the functions $f_t, g_t^i, \bar{g}_t^i$ are continuous,
twice-differentiable and convex functions. Note that the resulting
objective in problem~\eqref{eq:7} may be non-smooth due to the $\max$ terms
in the cost functions $\ell_t$ and $\ell_T$. Thus, the objective to be
optimized is a non-smooth function with smooth
components\footnote{A broad range of applications require cost
  functions that possess this structure. For example, the L1-norm objective adheres to this
  structure since $\|a\|_1 = \sum_{i=1}^n \max(a_i, -a_i)$ where $a =
  [a_1, \cdots, a_n]^T \in \mathbb{R}^n$. More examples in Section~\ref{sec:experiments}}.  \tron{} can also be trivially
extended to the case where there are more than two functions involved
in the $\max$ operator. We discuss several extensions of \tron{} in
Section~\ref{sec:conclusion}.

\section{Iterative Solver for Non-Smooth Objectives}
\label{sec:approach}

Our aim is to solve the optimization problem given in
equation~\eqref{eq:7}. However, for ease of exposition, we
will tackle the general version of problem~\eqref{eq:7} without the
dynamics constraints given by,
\begin{equation}
  \label{eq:1}
  \tag{B}
  \min_{y \in Y} f(y) + \max\{g_1(y), g_2(y)\}
\end{equation}
where $Y = \mathbb{R}^n$ is a closed and convex set, functions $f, g_1, g_2 :
\mathbb{R}^n \rightarrow \mathbb{R}$
are twice-differentiable convex functions in $Y$. Note that this is
simply a general
version of problem~\eqref{eq:7} when combined with the structure
assumed in equation~\ref{eq:14} (See
Section~\ref{sec:appl-traj-optim}). An example of such an objective is
shown in Figure~\ref{fig:hinge}.

\begin{figure}[t]
  \centering
  \includegraphics[width=0.7\linewidth]{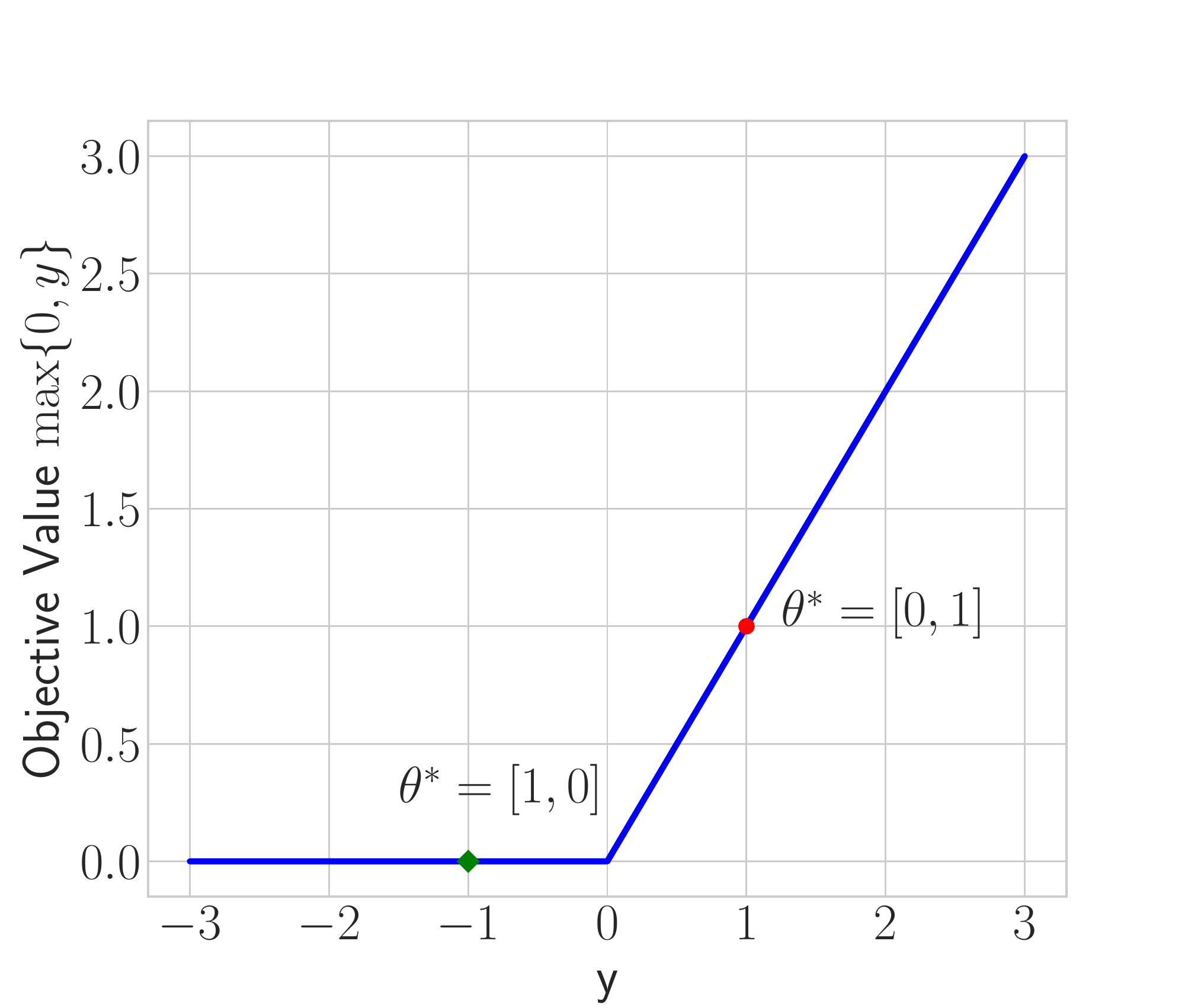}
  \caption{An example of objective in problem~\eqref{eq:1} with $f(y)
    = 0$, $g(y) = 0$ and $\bar{g}(y) = y$. The red point corresponds to
    $y = 1$ and the green point corresponds to $y = -1$.
  }
  \label{fig:hinge}
\end{figure}

We will formulate the above optimization problem in equation \eqref{eq:1}
as a two player min-max game where one player seeks to minimize the following
objective in $y \in Y$ and the other player maximizes the objective in
$\theta = [\theta_1, \theta_2]^T \in \Delta_2$ where $\Delta_2$
denotes the $2$-dimensional
simplex:
\begin{equation}
  \label{eq:2}
  \min_{y \in Y} f(y) + \max_{\theta \in \Delta_2} (\theta_1 g_1(y) +
  \theta_2g_2(y))
\end{equation}


Observe that
the inner maximization objective in
problem~\ref{eq:2} is linear in $\theta \in
\Delta_2$. Hence for any $y \in Y$ if both $g_1(y)$ and $g_2(y)$ are not
zero, the optimal $\theta^*$ will lie on the boundary of the 
simplex~\cite{10.5555/548834}, specifically one of $\theta_1, \theta_2$ should be $1$ and
the other $0$. In the case where both $g_1(y) = 0$ and $g_2(y) =
  0$, problem~\eqref{eq:1} and problem~\ref{eq:2} are trivially equivalent.
Substituting $\theta^*$ in the objective in problem~\ref{eq:2} it
reduces to $ f(y) + \max(g_1(y), g_2(y))$, the objective in
problem~\eqref{eq:1}.
Thus, any solution of problem~\ref{eq:2} is also a solution of the problem~\eqref{eq:1}.

However, this equivalence is not useful since the term $\max_{\theta \in \Delta_2}
(\theta_1g_1(y) + \theta_2g_2(y))$ could be highly non-smooth in $y$, which
results in $\theta^*$ varying drastically
with changing $y$. An example of such behavior is shown in
Figure~\ref{fig:hinge}, where if $y$ is changed between any two values
across $y=0$ then $\theta^*$ oscillates between $[0, 1]$ and $[1, 0]$.
Borrowing insights from online convex optimization
\cite{DBLP:journals/corr/abs-1909-05207}, we stabilize it
by adding a regularization term 
that penalizes deviations from the
previous estimate for $\theta$. More precisely, if we solve problem~\ref{eq:2}
iteratively and at any iteration $k$, we have an estimate
$\theta^{k-1}$ then we seek to optimize the following objective for
the $k$-th iteration
\begin{equation}
  \label{eq:3}
  \min_{y \in Y} f(y) + \max_{\theta \in \Delta_2} (\theta_1g_1(y) +
  \theta_2g_2(y) - \eta^{k}\KL(\theta||\theta^{k}))
\end{equation}
where $\eta^{k} \geq 0$ is the penalty coefficient at iteration $k$ and
$\KL(\theta||\theta^{k})$ is the KL-divergence regularization term that penalizes
deviation of $\theta$ from the previous estimate $\theta^{k}$ given by
\begin{equation}
  \label{eq:4}
  \KL(\theta||\theta^{k}) =
  \theta_1\log\frac{\theta_1}{\theta^{k}_1} + \theta_2\log\frac{\theta_2}{\theta^{k}_2}
\end{equation}
for any $\theta, \theta^{k} \in \Delta_2$.

Note that the objective in problem~\ref{eq:3} is an approximation of
the objective in problem~\ref{eq:2} (and hence, problem~\eqref{eq:1}) and is exact when $\eta^k =
0$.
To rectify this, we will solve the approximate objective iteratively for a
decreasing sequence $\{\eta^k\}$ where $\eta^k
\rightarrow 0$ as $k \rightarrow \infty$. Thus, in the limit the
solution to the approximate objective is the same as the original objective.




It is important to note that the inner maximization w.r.t $\theta$ in
problem~\ref{eq:3} can be \textit{solved in closed form}. Writing the
Lagrangian and solving the KKT conditions~\cite{DBLP:journals/iandc/KivinenW97}, we get the following
iterative update
for $\theta$ at any iteration $k$,
\begin{equation}
  \label{eq:5}
  \theta^k = \frac{\theta^{k-1}\exp({\frac{g(y)}{\eta^k}})}{\sum_{i=1}^2
  \theta^{k-1}_i\exp({\frac{g_i(y)}{\eta^k}})}
\end{equation}
where we denote $g(y) = [g_1(y), g_2(y)]^T \in
\mathbb{R}^2$, and $\theta^k = [\theta_1^k, \theta_2^k]^T \in \Delta_2$.
We can obtain $y^k$ by substituting update~\ref{eq:5} into the
objective in problem~\ref{eq:3} resulting in the following optimization problem at any
iteration $k$,
\begin{equation}
  \label{eq:6}
  \min_{y \in Y} f(y) + \eta^k\log(\theta_1^{k-1}\exp({\frac{g_1(y)}{\eta^k}})
  + \theta_2^{k-1}\exp({\frac{g_2(y)}{\eta^k}}))
\end{equation}
This results in an implicit update for $y$ that accounts for the
$\theta$ update. This is reminiscent of
implicit online learning~\cite{DBLP:conf/icml/KulisB10}, which
typically has faster convergence and is robust in adversarial settings.
The solution $y^k$ obtained from solving
problem~\ref{eq:6} can then be substituted into the update in
problem~\ref{eq:5} to get $\theta^k$. Let us denote the objective in
equation \ref{eq:6} as $\calL_{\eta^k}(y, \theta^{k-1})$. 
It is useful to observe that the objective $\calL_{\eta^k}$ is smooth and
twice-differentiable in $y$.
Thus for each iteration $k$, we obtain a smoothed approximation of the
objective in problem~\eqref{eq:1} and as $\eta^k \rightarrow 0$ we get a
tighter approximation. We would like to emphasize that the proposed
solver \tron{} exploits the structure of problem~\eqref{eq:1} by
restricting $\theta$ to be in the simplex, and by using
KL-divergence as the regularization. This trick has connections to
exponentiated gradient descent~\cite{DBLP:journals/iandc/KivinenW97}, which also uses KL-divergence as
regularization to perform efficient optimization on a simplex. As we will
see in the experiments, this enables \tron{} to quickly solve
problem~\eqref{eq:1}. \tron{} is also related to proximal methods~\cite{DBLP:journals/ftopt/ParikhB14} and
augmented lagrangian methods~\cite{hestenes1969multiplier}.
The proposed iterative
solver is summarized in Algorithm~\ref{alg:abstract}.

\begin{algorithm}[t]
  \caption{\tron{} for general structured non-smooth objectives}
  \label{alg:abstract}
  \begin{algorithmic}[1]
    \STATE {\bfseries Input:} Number of iterations $K$, sequence
    $\{\eta^k\}$ such that $0 \leq \eta^{k+1}\leq \eta^k$ and $\eta^k
    \rightarrow 0$, sequence
    $\{\epsilon^k\}$ such that $\epsilon^k \geq 0$ and $\epsilon^k
    \rightarrow 0$
    \STATE Initialize $\theta^0 \in \Delta_2$, $y^0 \in Y$
    \FOR {$k=1$ to $K$}
    \STATE Solve problem~\ref{eq:6} (warm-starting from $y^{k-1}$) to obtain $y^k$ such that $\|\nabla_y\calL_{\eta^k}(y^k,
    \theta^{k-1})\| \leq \epsilon^k$
    \STATE Obtain $\theta^k$ using update~\ref{eq:5} with $y = y^k$
    \ENDFOR
    \STATE {\bfseries Return:} Solution $y^K$
  \end{algorithmic}
\end{algorithm}




\section{Convergence Analysis}
\label{sec:convergence-analysis}

In this section, we present convergence analysis for \tron{}
described in Algorithm~\ref{alg:abstract}. We would like to show that
as $k \rightarrow \infty$, every limit point of the sequence of
solutions $\{y^k\}$ is a stationary point of the original
problem~\eqref{eq:1}. The following theorem states this guarantee: 
\begin{theorem}[Convergence under Inexact Minimization]
  Assume $Y = \mathbb{R}^n$, and $f, g$ are continuously
  differentiable. For $k=1, \cdots$ let $y^k$ satisfy
  \begin{equation*}
    \|\nabla_y \calL_{\eta^k}(y^k, \theta^{k-1})\| \leq \epsilon^k
  \end{equation*}
  where $\{\theta^k\}$ is bounded, and $\{\epsilon^j\}$ and
  $\{\eta^k\}$ satisfy
  \begin{align*}
    &0 \leq \eta^{k+1} \leq \eta^k, \eta^k \rightarrow 0 \\
    &0 \leq \epsilon^k, \epsilon^k \rightarrow 0
  \end{align*}
  Then every limit point $y^*$ of the sequence $\{y^k\}$ is a stationary
  point of problem~\eqref{eq:1}, i.e. $0 \in
  \partial(f(y) + \max(g_1(y), g_2(y)))$ or $\nabla_yf(y^*) +
  \lambda\nabla_yg_1(y^*) + (1 - \lambda)\nabla_yg_2(y^*) = 0$ for
  some $\lambda \in [0, 1]$.
  \label{theorem:inexact-minimization}
\end{theorem}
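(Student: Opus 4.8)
The plan is to translate the hypothesis $\|\nabla_y\calL_{\eta^k}(y^k,\theta^{k-1})\|\le\epsilon^k$ into a statement purely about $f,g_1,g_2$ and the weights $\theta^k$, and then pass to the limit along a subsequence. The crucial algebraic observation is that the inner maximizer is already ``baked into'' the gradient of the smoothed objective: differentiating $\calL_{\eta^k}(y,\theta^{k-1}) = f(y) + \eta^k\log\big(\theta_1^{k-1}e^{g_1(y)/\eta^k}+\theta_2^{k-1}e^{g_2(y)/\eta^k}\big)$ in $y$, the factors of $\eta^k$ cancel and one gets
\[
  \nabla_y\calL_{\eta^k}(y,\theta^{k-1}) = \nabla f(y) + \bar\theta_1(y)\nabla g_1(y) + \bar\theta_2(y)\nabla g_2(y),
\]
where $\bar\theta_i(y) = \theta_i^{k-1}e^{g_i(y)/\eta^k}/\sum_j\theta_j^{k-1}e^{g_j(y)/\eta^k}$ is exactly the softmax weight. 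Evaluating at $y=y^k$, update~\ref{eq:5} gives $\bar\theta_i(y^k)=\theta_i^k$, so the hypothesis becomes $\big\|\nabla f(y^k)+\theta_1^k\nabla g_1(y^k)+\theta_2^k\nabla g_2(y^k)\big\|\le\epsilon^k$.

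Next I would extract a limit point. Let $\{y^{k_j}\}$ be a subsequence with $y^{k_j}\to y^*$. Since every $\theta^{k_j}$ lies in the compact set $\Delta_2$, pass to a further subsequence along which $\theta^{k_j}\to\theta^*=(\lambda,1-\lambda)$ for some $\lambda\in[0,1]$. Using continuity of $\nabla f,\nabla g_1,\nabla g_2$, boundedness of $\{\theta^k\}$, and $\epsilon^{k_j}\to 0$, taking $j\to\infty$ in the displayed inequality yields
\[
  \nabla f(y^*) + \lambda\nabla g_1(y^*) + (1-\lambda)\nabla g_2(y^*) = 0,
\]
which is the asserted KKT/stationarity condition.

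It then remains to match this with $0\in\partial\big(f(\cdot)+\max(g_1,g_2)\big)(y^*)$. If $g_1(y^*)=g_2(y^*)$ the two are identical, since $\partial\max(g_1,g_2)(y^*)$ is precisely the convex hull $\{\mu\nabla g_1(y^*)+(1-\mu)\nabla g_2(y^*):\mu\in[0,1]\}$. If instead, say, $g_1(y^*)>g_2(y^*)$, I would argue $\lambda$ must equal $1$: by continuity $g_1(y^{k_j})-g_2(y^{k_j})\ge\delta>0$ for large $j$, and since $\eta^{k_j}\to0$ the softmax weight $\theta_2^{k_j}=\bar\theta_2(y^{k_j})\le(\theta_2^{k_j-1}/\theta_1^{k_j-1})\,e^{-\delta/\eta^{k_j}}$ is squeezed to $0$, so $\lambda=\lim\theta_1^{k_j}=1$ and the identity reduces to $\nabla f(y^*)+\nabla g_1(y^*)=0$, which is the (singleton) subdifferential condition there.

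I expect this last step to be the main obstacle: the squeezing estimate is only useful if $\theta_2^{k_j-1}/\theta_1^{k_j-1}$ cannot blow up faster than $e^{\delta/\eta^{k_j}}$, which has to be controlled using the telescoped relation $\log(\theta_2^k/\theta_1^k)=\log(\theta_2^0/\theta_1^0)+\sum_{i\le k}(g_2(y^i)-g_1(y^i))/\eta^i$ together with the boundedness of $\{\theta^k\}$; this is precisely where the hypotheses $\eta^k\to0$ and $\eta^{k+1}\le\eta^k$ are genuinely used (the earlier steps never needed them). The remaining pieces --- smoothness of $\calL_{\eta^k}$ and existence of a limit point --- are routine.
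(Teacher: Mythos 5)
Your proposal is correct and follows essentially the same route as the paper: differentiate the log-sum-exp smoothing to write $\nabla_y\calL_{\eta^k}(y^k,\theta^{k-1})$ as $\nabla f(y^k)$ plus a softmax-weighted convex combination of $\nabla g_1(y^k)$ and $\nabla g_2(y^k)$, then pass to the limit at a limit point $y^*$. Your compactness/subsequence extraction of $\lambda\in[0,1]$ is in fact slightly cleaner than the paper's three-way case analysis and already delivers the theorem's stated conclusion, so the ``main obstacle'' you flag --- forcing $\lambda=1$ when $g_1(y^*)>g_2(y^*)$, which requires controlling the ratio $\theta_2^{k-1}/\theta_1^{k-1}$ against $e^{\delta/\eta^k}$ --- is only needed for the sharper subdifferential form of the claim, and is precisely the point the paper's own proof glosses over by tacitly treating that ratio as bounded.
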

\begin{proof}
  Proof given in Appendix~\ref{sec:asympt-conv-under-1}.
\end{proof}

Theorem~\ref{theorem:inexact-minimization} guarantees that \tron{}
converges to a stationary point of
problem~\eqref{eq:1}. In
Section~\ref{sec:appl-traj-optim}, we will show that this implies
convergence to the global minimum of the trajectory optimization
problem~\eqref{eq:7} when the dynamics are linear, and
convergence to a stationary point when the dynamics are nonlinear.

\section{Application to Trajectory Optimization}
\label{sec:appl-traj-optim}

In Section~\ref{sec:approach} we presented \tron{}, an iterative solver that
can be used to efficiently solve problem~\eqref{eq:1}. We will now show how \tron{} can be used to
solve trajectory optimization problem~\eqref{eq:7} when
the cost functions are non-smooth with smooth components. Let us rewrite
problem~\eqref{eq:7} to
accommodate the structure from equation~\ref{eq:14} in the cost
function as follows (using $M=1$ for ease of notation):
\begin{equation}
  \label{eq:15}
  \begin{aligned}
  \min_{x_{0:T}, u_{0:T}} \quad & \sum_{t=0}^{T} f_t(x_t, u_t) +
  \max\{g_t(x_t, u_t), \bar{g}_t(x_t, u_t)\} \\
  \textrm{subject to} \quad & x_{t+1} = \kappa(x_t, u_t)
  \end{aligned}
\end{equation}
where $f_T(x, u) = f_T(x)$, $g_T(x, u) = g_T(x)$, and $\bar{g}_T(x, u) = \bar{g}_T(x)$.
Observe that the above objective is of the same form as the objective
in problem~\eqref{eq:1}. Hence, we can use \tron{} to optimize the above problem. Formulating the above problem as a min-max
game as in Section~\ref{sec:approach}, we get the following objective
in the state-control inputs at iteration $k$,
\begin{equation}
  \label{eq:16}
  \begin{aligned}
    \min_{x_{0:T}, u_{0:T}} & \quad \sum_{t=0}^{T}
    f_t(x_t, u_t) + \eta^k\log(\theta_t^{k-1}\exp({\frac{g_t(x_t,
        u_t)}{\eta^k}}) \\
    &~~~~~~~~~~~~~~~~~~~~~~~~~~~~~+ \bar{\theta}^{k-1}_t\exp({\frac{\bar{g}_t(x_t,
      u_t)}{\eta^k}})) \\
  \textrm{subject to} & \quad  x_{t+1} = \kappa(x_t, u_t)
  \end{aligned}
\end{equation}

Notice that unlike
problem~\ref{eq:6}, we have a dynamics constraint $x_{t+1} =
\kappa(x_t, u_t)$. We account for this by using \textsc{Ilqr}
\cite{DBLP:conf/icinco/LiT04} to optimize the objective
in problem~\ref{eq:16}, thereby implicitly enforcing
the dynamics constraint in our solver\footnote{We can use any
  trajectory optimization solver such as
  \textsc{Ddp}~\cite{doi:10.1080/00207176608921369},
  \textsc{Chomp}~\cite{chomp} in place of \textsc{Ilqr}}. We use the control trajectory
from the previous iteration to warm-start \textsc{Ilqr} at the current
iteration to ensure it remains 
fast. The entire procedure to solve
the trajectory optimization problem~\ref{eq:15} using \tron{} is described in Algorithm~\ref{alg:trajopt}.

\begin{algorithm}[t]
  \caption{Trajectory Optimization with Structured Non-smooth Cost
    Functions using \tron{}}
  \label{alg:trajopt}
  \begin{algorithmic}[1]
    \STATE {\bfseries Input:} Initial state $x_0$, initial control sequence
    $u^0_{0:T-1}$, sequence $\{\eta^k\}$,
    Number of iterations $K$
    \STATE Initialize $[\theta_t^0, \bar{\theta}_t^0]^T \in \Delta_2$ for $t=0,\cdots,T$
    \STATE Compute $x_{0:T}^0 \leftarrow$ Rollout dynamics $\kappa$
    from $x_0$ using $u_{0:T-1}$
    \FOR {$k=1$ to $K$}
    \STATE $x^k_{0:T}, u^k_{0:T-1} \leftarrow$ Solution of \textsc{Ilqr}
    problem~\ref{eq:16}
    \STATE Obtain $[\theta_t^k, \bar{\theta}_t^k]^T \in \Delta_2$ for
    $t=0,\cdots,T$ using update~\ref{eq:5}
    \ENDFOR
    \STATE {\bfseries Return:} $x_{0:T}^K, u_{0:T-1}^{K-1}$
  \end{algorithmic}
\end{algorithm}

We can use Theorem~\ref{theorem:inexact-minimization} to analyze the
convergence properties of Algorithm~\ref{alg:trajopt}.
Observe
that when the dynamics $\kappa$ are linear, problem~\ref{eq:15} is
convex. Using the fact that any stationary point of a convex problem
is a global minimum in conjunction with Theorem~\ref{theorem:inexact-minimization}, we
can guarantee that \tron{} converges to the global minimum of
problem~\ref{eq:15} when dynamics $\kappa$ are linear. However, when the dynamics $\kappa$
are arbitrarily nonlinear, then we lose convexity of
problem~\ref{eq:15} and \tron{} is only guaranteed to converge to a
stationary point.

\section{Experiments}
\label{sec:experiments}

In this section, we will evaluate the empirical performance of \tron{}
against baselines on four tasks: lasso problem with synthetic data\eqref{sec:lasso},
collision-free motion planning for a mobile robot\eqref{sec:traj-optim-diff}, sparse optimal
control for a surgical steerable needle\eqref{sec:sparse-optimal-control}, and a satellite rendezvous
problem\eqref{sec:sparse-optim-contr}. \tron{} and other baselines are implemented in Python on
a $3.1$GHz Intel Core i5 machine, and the code is released at
\url{https://github.com/vvanirudh/TRON}. 

\subsection{Lasso Problem with Synthetic Data}
\label{sec:lasso}

In this experiment, we will solve a $2$D lasso problem given as
follows:
\begin{equation}
  \label{eq:13}
  \min_{w} \frac{1}{N}\|Xw - y\|_2^2 + \rho\|w\|_1
\end{equation}
where $N = 1000, X \in \mathbb{R}^{1000 \times 2}$, $y \in \mathbb{R}^{1000}$ and $\rho \in
\mathbb{R}^+$ are synthetically generated. We use $\rho = 0.05$ for
this experiment. We implemented \tron{} with Newton's method to
solve sub-problems~\ref{eq:6}, and compare it with Newton's method on
the non-smooth problem~\ref{eq:13} and subgradient method. For the
subgradient method, initial learning rate is chosen carefully and is
decayed at $\mathcal{O}(\frac{1}{\sqrt{k}})$ where $k$ is the
iteration number. We implement Newton's method using a backtracking
line search to compute the newton direction. For \tron{}, we use $\eta^{0} = 1$ and
update $\eta^{k+1} = 0.9\eta^k$ for each iteration $k$.


\begin{figure}[t]
  \centering
  \includegraphics[width=0.9\linewidth]{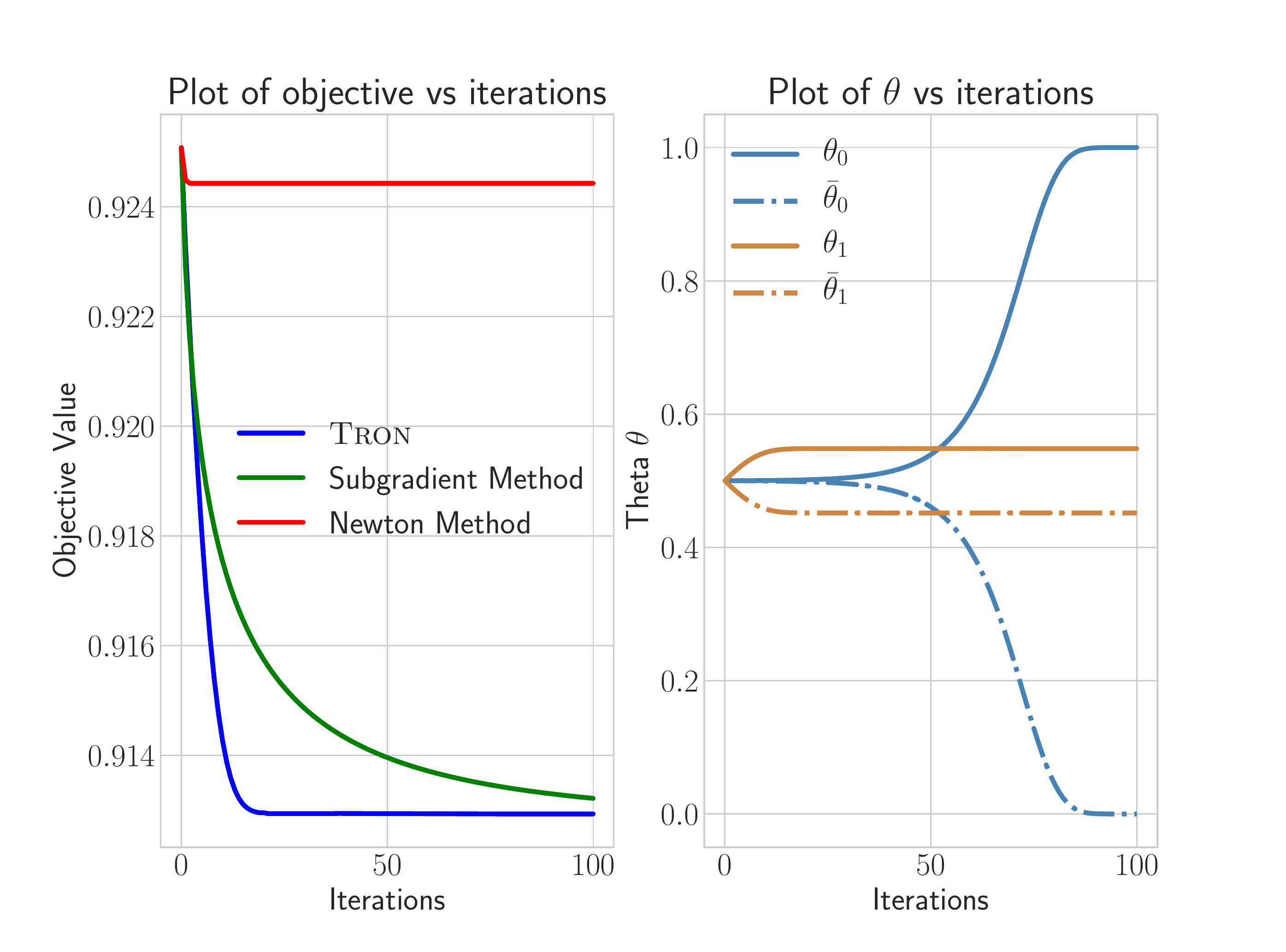}
  \caption{Performance of \tron{}, subgradient method and Newton's
    method on the $2$D lasso problem. On the left, we plot the
    objective value vs iterations of each method. On the right, we
    plot how the dual variables $\theta$ vary across iterations.}
  \label{fig:lasso}
\end{figure}


The results are shown in Figure~\ref{fig:lasso}. As the objective in
problem~\ref{eq:13} is not differentiable, once we approach close to the
minimum Newton's method gets stuck as line search returns extremely
small steps. This results in Newton's method having extremely slow
convergence as shown in Figure~\ref{fig:lasso} (left). Subgradient
method, on the other hand, does not rely on line search and with the
help of decaying learning rate makes steady but slow progress towards
the minimum. \tron{}, using Newton's method to optimize the adaptively
smoothed objective, quickly converges to the global minimum of the
problem. In Figure~\ref{fig:lasso} (right), we plot the dual variables
$\theta$ as they vary across iterations. We start with initial values
of $0.5$ for all the dual variables. The
converged value of $w$ for \tron{} is $[-0.0005, 0]$, and as expected
the corresponding dual variables for each dimension converge to $1, 0$
for the non-zero component and to values in $[0, 1]$ for the zero
component. See the proof of Theorem~\ref{theorem:inexact-minimization}
in Appendix~\ref{sec:asympt-conv-under-1} for theoretical insights on the
final converged value of the dual variables.

\subsection{Collision-Free Motion Planning for a Mobile Robot}
\label{sec:traj-optim-diff}

Our second experiment involves a simulated differential drive mobile
robot. The state is defined by vector $\xbold = [p_x, p_y, \theta]^T
\in \mathbb{R}^3$
where $(p_x, p_y)$ describes the robot's two-dimensional position, 
$\theta$ is its orientation, and the control input is defined by the
vector $\ubold = [v_l, v_r]^T \in \mathbb{R}^2$ where $v_l, v_r$ describes the left and
right wheel speeds (m/s) respectively. The dynamics of the robot are given
by the following equations, $\dot{p_x} = \frac{1}{2} (v_l + v_r)\cos\theta$,
$\dot{p_y} = \frac{1}{2} (v_l + v_r)\sin\theta$, and
$\dot{\theta} = (v_r - v_l) / w$,
where
$w$
is the distance between the wheels of the robot. This setup
is very similar to the experimental setup used in
\cite{DBLP:conf/isrr/Berg13}. We discretize the dynamics using a
third-order Runge Kutta integrator.

The circular robot is moving in an environment
with $O = 11$ obstacles (see Figure~\ref{fig:diffdrive} left) and needs to move from
a specified start state to a goal state while avoiding collision with
obstacles. We use the following cost functions in problem~\eqref{eq:7} to achieve this
objective,
\begin{align*}
  &\ell_0(\xbold_0, \ubold_0) = (\xbold_0 - \start)^TQ(\xbold_0
    - \start) + (\ubold_0 - \nominal)^TR(\ubold_0 - \nominal) \\
  &\ell_t(\xbold_t, \ubold_t) = (\ubold_t -
    \nominal)^TR(\ubold_t - \nominal) + \rho \sum_{i=1}^O \max\{0,
    -\nu d_i(\xbold_t)\} \\
  &\ell_T(\xbold_t) = (\xbold_t - \goal)^TQ(\xbold_t - \goal)
\end{align*}
where $\goal$ is the goal state, $\start$ is the start state, $T$ is
the horizon, and
$\nominal$ is the nominal control input. $\rho, \nu$ are positive
scalar factors, and the function $d_i(\xbold)$ gives the signed
distance between the robot at state $\xbold$ and the $i$-th obstacle
of the environment. Note that we penalize a trajectory if it results
in the robot penetrating an obstacle (thus, $d_i(\xbold)$ is
negative for any $i$), and zero cost if the robot does not collide
with any of the obstacles. Note that this cost
function is non-smooth and convex.

\begin{figure}[t]
  \centering
  \includegraphics[width=0.9\linewidth]{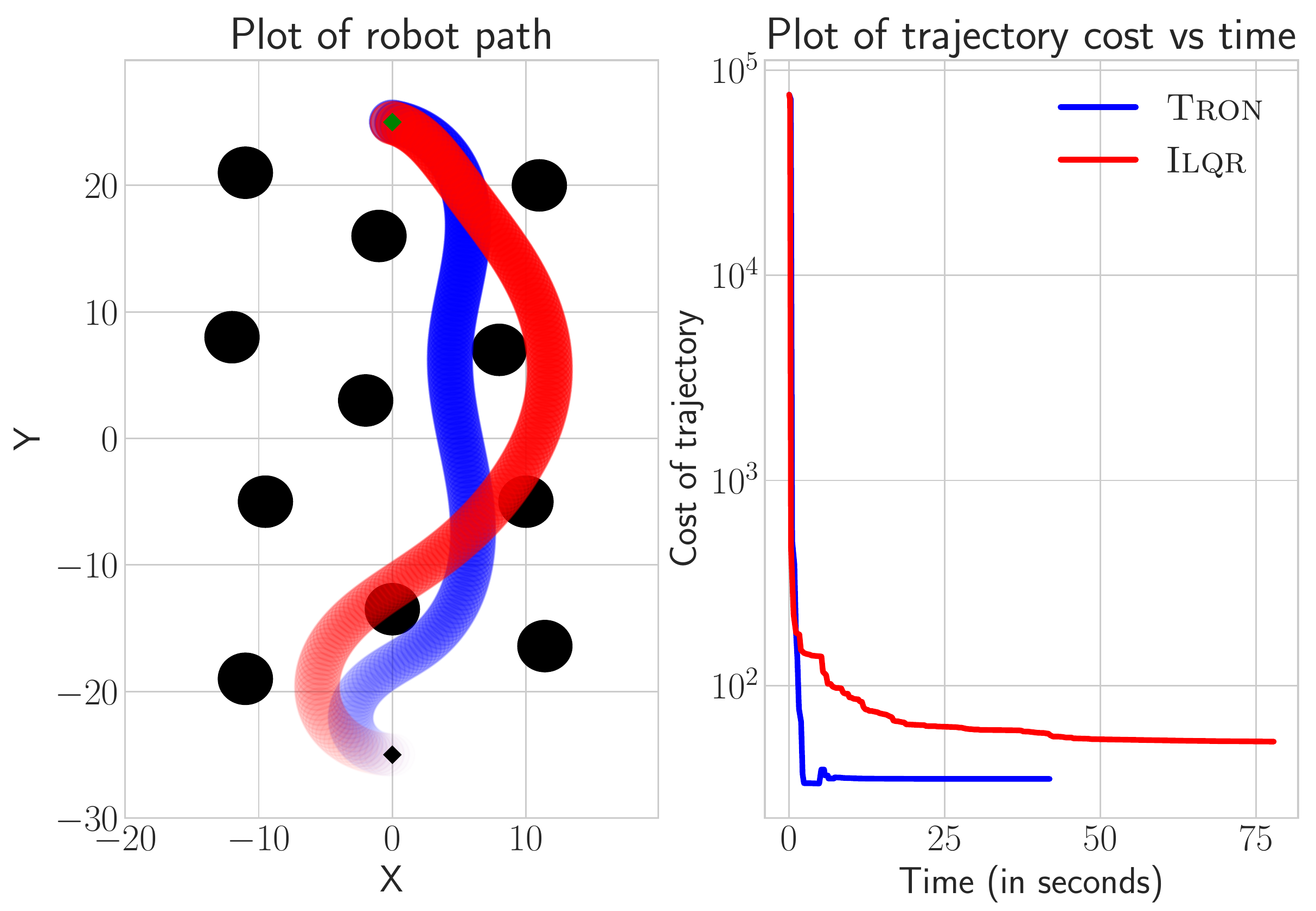}
  \caption{Performance of \tron{} and \textsc{Ilqr} for trajectory
    optimization of a differential drive robot. On the left, the
    resulting trajectory from both \tron{} (blue) and \ilqr{}
    (red) are shown (lighter the color, earlier the time step). The
    start position is depicted using black diamond, and the goal
    position is depicted using green diamond. On the
    right, we plot the cost vs time (in seconds)
    for both methods. Both methods are run for $200$ iterations.}
  \label{fig:diffdrive}
\end{figure}

We compare \tron{} (with a fixed $\eta^k = 1$ for all iterations) with a baseline that uses \ilqr{} on the non-smooth
objective. The results are
shown in Figure~\ref{fig:diffdrive}. \ilqr{} exhibits extremely slow
convergence and does not converge to a collision-free path as shown in
Figure~\ref{fig:diffdrive} (left). On the other hand, \tron{} quickly
converges to a collision-free path with a significantly lower cost
compared to \ilqr{}. Note that the y-axis in
Figure~\ref{fig:diffdrive} (right) is in log-scale. It is also
interesting to observe the path \tron{} converges to. Since the cost
function $\ell_t$ only penalizes if the robot collides with the
obstacle and zero penalty otherwise, we see that the resulting path
narrowly avoids collision with obstacles, and leads the robot directly
to the goal between the obstacles on a low-cost trajectory.

\subsection{Sparse Optimal Control for a Surgical Steerable Needle}
\label{sec:sparse-optimal-control}

Our third experiment involves a simulated bevel-tip surgical steerable needle
that is highly underactuated and non-holonomic \cite{DBLP:conf/icra/DuindamASG08}. Planning the motion of
the needle is a challenging problem as it can only be controlled from
its base through insertion and twisting. We use the motion model
proposed in \cite{DBLP:journals/ijrr/WebsterKCCO06} where the state of
the needle $x = [p_x, p_y, p_z, \alpha, \beta, \gamma]^T \in \mathbb{R}^6$ is represented by a transformation matrix $X \in SE(3)$:
\begin{align*}
  X = 
  \begin{bmatrix}
    R & p \\
    0 & 1
  \end{bmatrix}
\end{align*}
where $R \in SO(3)$ is a $3\times 3$ rotation matrix describing
needle's orientation constructed from $[\alpha, \beta, \gamma]^T$
which is the euler angle representation, and $p = [p_x, p_y, p_z]^T\in \mathbb{R}^3$ describes its
position. The control input $u = [v, w, \delta]^T \in \mathbb{R}^3$ is represented as:
\begin{align*}
  U =
  \begin{bmatrix}
    W & V \\
    0 & 1
  \end{bmatrix}, ~~~~~
        W =
        \begin{bmatrix}
          0 & -w & 0 \\
          w & 0 & -v\delta \\
          0 & v\delta & 0
        \end{bmatrix}
\end{align*}
where $V = [0, 0, v]^T$, $v$ is the linear velocity of
the needle tip (m/s), $w$ is the angular speed of the needle base
(rad/s), and $\delta$ is the desired curvature of the needle. The
kinematics of the needle are given by $\dot{X} = XU$. This setup is
very similar to the experimental setup used in
\cite{DBLP:conf/wafr/BergPAAG10}.

The task is to plan a path for the needle from a fixed start state to a goal
state ensuring kinematic feasibility. In addition, we would also like
sparsity in the angular speed $w$ control as rotation of the needle
inside a body increases trauma to patient tissues. We use very similar
objectives as in Section~\ref{sec:traj-optim-diff} except the
following:
\begin{align*}
  \ell_t(x_t, u_t) = (u_t - \bar{u})^TR(u_t - \bar{u}) + \rho|w|
\end{align*}
where $u = [v, w, \delta]^T$ and $\rho$ is a positive scalar. Thus, we penalize the absolute value (or
L1-norm) of
the angular speed $w$ to enforce sparsity in the resulting trajectory
for that control input. This cost
function is convex but non-smooth.

\begin{figure}[t]
  \centering
  \includegraphics[width=0.9\linewidth]{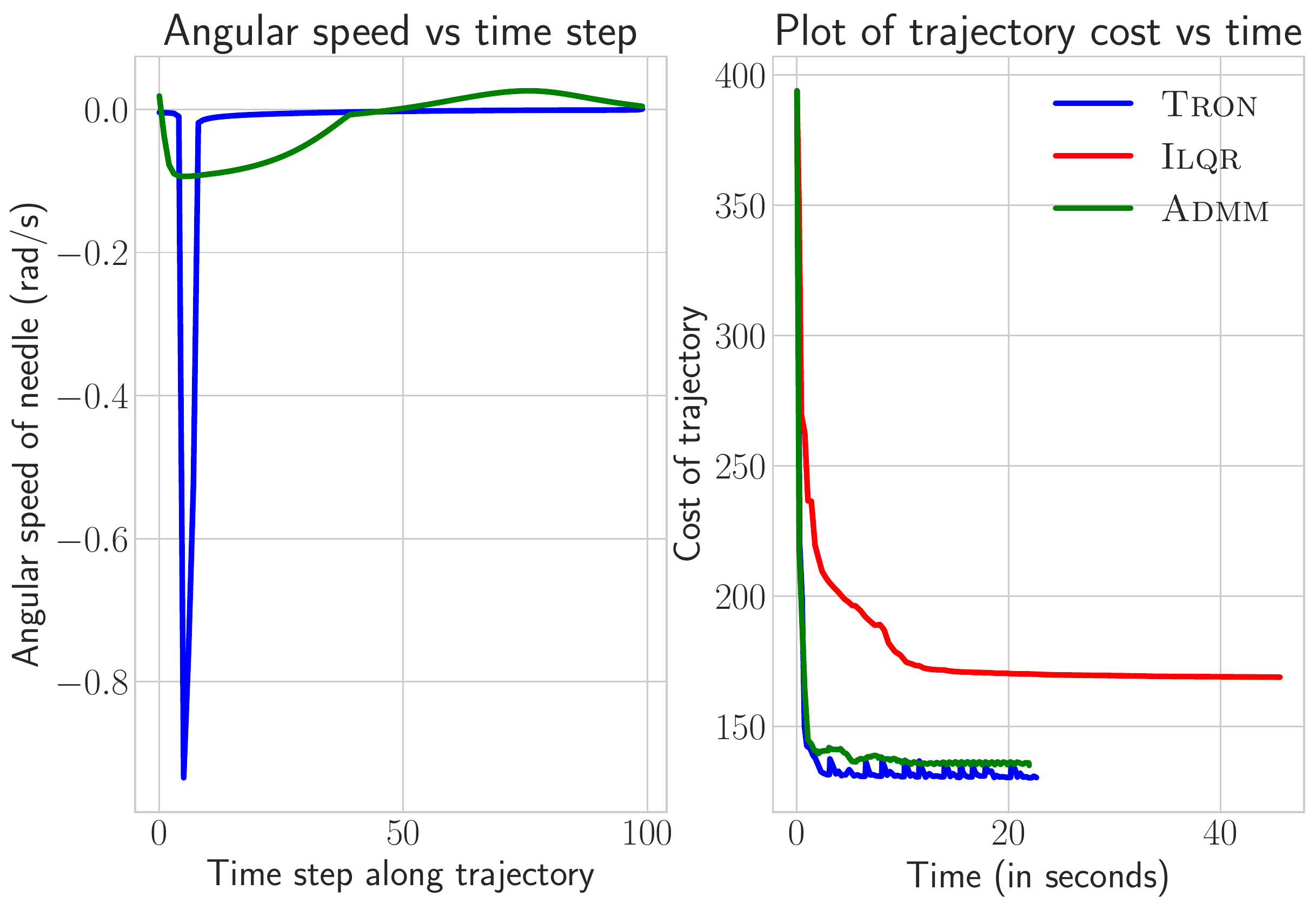}
  \caption{Performance of \tron{}, \admm{}, and \ilqr{} on the
    surgical needle task. On the right, we plot the cost of the
    trajectory vs time (in seconds) for all the methods. On the left, we show the angular speed
    control sequence for the final converged trajectory for \tron{}
    and \admm{} methods. We do not show \ilqr{} as it does not exhibit
  any sparse behavior. All approaches are run for $100$ iterations.}
  \label{fig:needle}
\end{figure}

We compare \tron{} (with a fixed $\eta^k = 0.3$ for all iterations)
with a baseline that uses \ilqr{} on the non-smooth objective. In
addition, we also implement the \admm{} approach proposed by Le
Cleac’h and Manchester \cite{lefast} which accounts for the L1-norm
penalty. The results are shown in Figure~\ref{fig:needle}. As expected
on non-smooth objectives, \ilqr{} exhibits slow convergence and we have
noticed that it does not result in a trajectory that has sparse
angular speed control. \admm{}, on the other hand, accounts for the
L1-norm penalty and shows fast convergence behavior. As shown in
Figure~\ref{fig:needle} (right), it converges to a trajectory that has
significantly lower cost compared to \ilqr{}. \tron{} also exhibits fast convergence
behavior similar to \admm{} and as shown in Figure~\ref{fig:needle}
(left), it does a much better job at enforcing sparsity in angular
speed of the needle in the final trajectory, (in fact, \admm{} does not achieve
any sparsity in its final trajectory) and achieves lower final
trajectory cost.


\subsection{Satellite Rendezvous Problem}
\label{sec:sparse-optim-contr}

Our last experiment involves a simulated satellite rendezvous
problem. These satellites rely on reaction control system thrusters
for control which can only operate inside a limited range, and are
suitable for a bang-off-bang control strategy. Thus, enforcing
sparsity in control is desirable to achieve this strategy. The
objective of this task is to control a chaser satellite so that it
approaches and docks onto a target satellite. We borrow the
linearized version of this problem from \cite{lefast} in which the state
vector $x = [p_1, p_2, p_3, \dot{p}_1, \dot{p}_2, \dot{p}_3] \in
\mathbb{R}^6$, where $[p_1, p_2, p_3]^T$ is the position and
$[\dot{p}_1, \dot{p}_2, \dot{p}_3]^T$ is the velocity, both expressed
in a frame centered on the target satellite. The control input $u$ is
the force applied on the satellite, and the model is given as follows:
\begin{align*}
  \dot{x} =
  \begin{bmatrix}
    \dot{p}_1 \\
    \dot{p}_2 \\
    \dot{p}_3 \\
    3n^2p_1 + 2np_2 + u_1/m \\
    -2n\dot{p}_1 + u_2/m \\
    -n^2p_3 + u_3/m
  \end{bmatrix}
\end{align*}
where $n$ is the mean motion of the target satellite's orbit and $m$
is the satellite's mass. This setup is similar to the setup used in
\cite{lefast} with small modifications.

The objective of the task is for the chaser satellite to reach the
state $[0, 0, 0, 0, 0, 0]^T$ (which is the target satellite's position
and zero velocity) by the end of the trajectory horizon. We use the
following cost functions in problem~\eqref{eq:7} to achieve this
objective,
\begin{align*}
  \ell_t(x_t, u_t) &= \alpha\|u_t\|_1 + u_t^TRu_t\\
  \ell_T(x_T) &= x_T^TQx_T
\end{align*}
for $0 \leq t \leq T-1$. Thus, the objective function enforces that
the chaser satellite reaches the target and uses sparse controls to
achieve it.

\begin{figure}[t]
  \centering
  \includegraphics[width=0.9\linewidth]{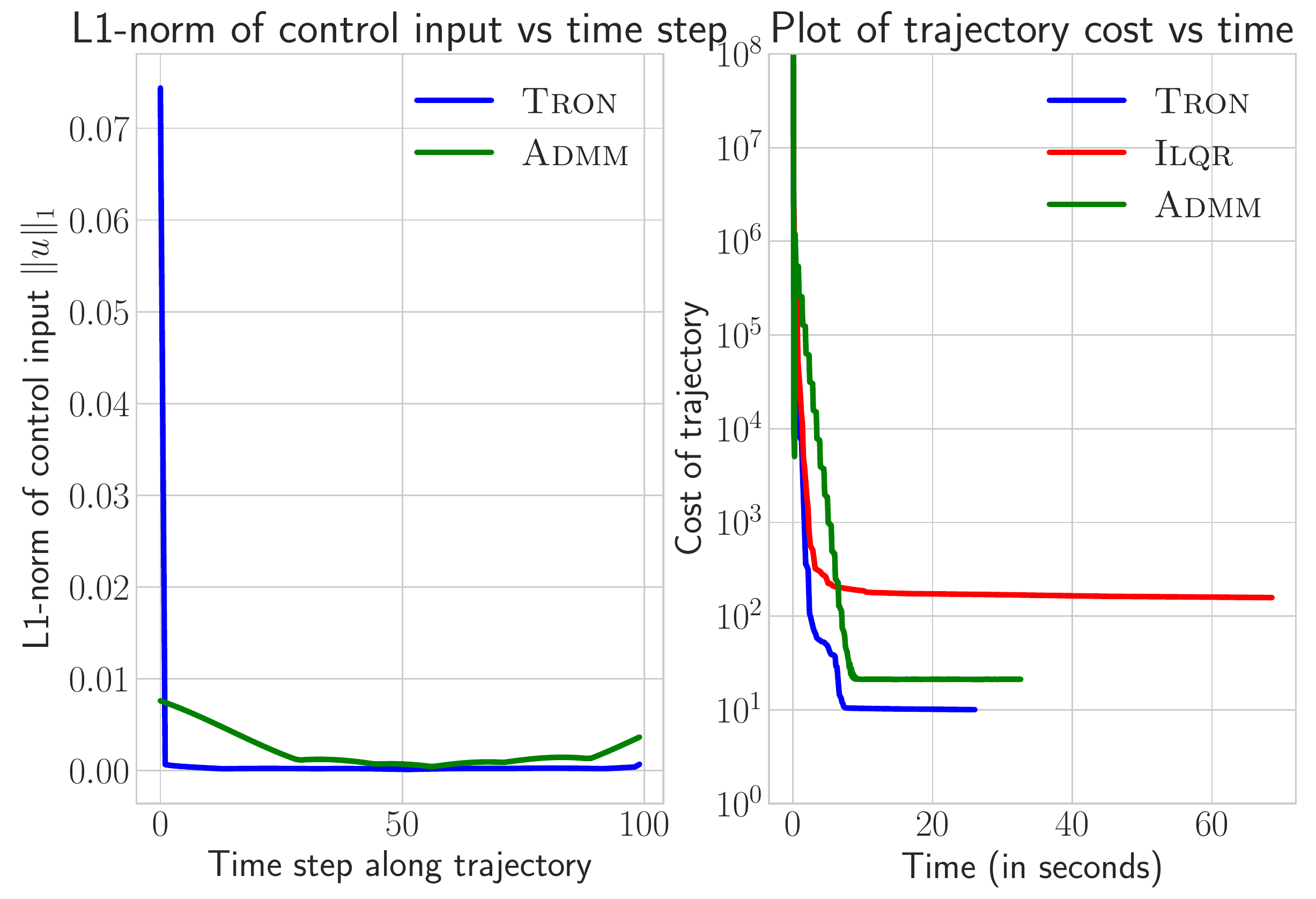}
  \caption{Performance of \tron{}, \admm{}, and \ilqr{} on the
    satellite rendezvous problem. On the right, we plot the cost of
    trajectory vs time (in seconds) for all the methods. On the left,
    we show L1-norm of the control input sequence in the final
    trajectory for \admm{} and \tron{}. We do not show \ilqr{} as it
    does not exhibit any sparse behavior. All approaches are run for
    $300$ iterations.}
  \label{fig:satellite}
\end{figure}

We compare \tron{} (with a fixed $\eta^k = 0.1$ for all iterations)
with \ilqr{} and \admm{}. The results are shown in
Figure~\ref{fig:satellite}. Interestingly, \ilqr{} exhibits fast
convergence initially until it reaches close to the minimum where the
non-smoothness of the objective results in extremely small updates and
slow convergence (See Figure~\ref{fig:satellite} right). \admm{} also
has extremely fast convergence at the
start and then increases the cost for a few iterations before
converging to a lower cost compared to ilqr{}. However, \tron{}
exhibits the fastest convergence among all and quickly reaches a
significantly lower cost. In Figure~\ref{fig:satellite} (left), we
plot the L1-norm of the control sequence for the final trajectory in
the case of \admm{} and \tron{}. We refrain from plotting the L1-norm
for \ilqr{} as its final trajectory does not exhibit any sparse
behavior and thus, is very undesirable for RCS control. As shown in
the plot, \tron{} converges to a final trajectory that uses the
thrusters at the beginning and then simply coasts for the rest of the
trajectory without using the thrusters. This behavior is ideal for RCS
thrusters. \admm, on the other hand, converges to a trajectory that
exhibits significantly less sparsity and thus, has higher cost.

\section{Extensions and Conclusion}
\label{sec:conclusion}

The proposed solver \tron{} can be extended in several ways. Firstly,
we can account for arbitrary non-linear constraints $\alpha, \beta$ in
problem~\eqref{eq:7} by using augmented lagrangian techniques such as
the ones used in Plancher
et. al.~\cite{DBLP:conf/iros/PlancherMK17}. Secondly, \tron{} is
easily extensible to cost functions where there are more than two
functions involved in the $\max$ operator. In such a case, the dual
variables would lie in a higher dimensional simplex but we still
achieve fast convergence since we are exploiting the
structure. Finally, \tron{} can be made numerically more robust by
employing techniques proposed in Howell et. al.~\cite{altro} such as
using square-root backward pass in \textsc{Ilqr}.

In conclusion, this work has proposed a fast solver \tron{} that can
be used as a general purpose tool in trajectory optimization where the
cost functions are non-differentiable with differentiable components. \tron{} exhibits
fast convergence behavior because it exploits the structure of the
cost function to construct a sequence of adaptively
smoothed objectives that can each be optimized efficiently. \tron{}
is provably guaranteed to converge to the global optimum in the case
of convex costs and linear dynamics, and to a stationary point in the case of
non-linear dynamics. Empirically, we show that \tron{} outperforms
other trajectory optimization approaches in simulated planning and
control tasks.

\section*{Acknowledgements}
\label{sec:acknowledgements}

The authors would like to thank Wen Sun for providing code and
pointing us to the surgical steerable needle application. In addition,
the authors would also like to thank the entire LairLab for insightful
discussion. AV would like to thank Jaskaran Singh, Ramkumar Natarajan,
Allison del Giorno, and
Anahita Mohseni-Kabir for reviewing the initial draft. AV is supported
by the CMU presidential fellowship endowed by TCS.

\bibliographystyle{IEEEtran}
\bibliography{bib}

\appendix

\subsection{Proof of Theorem~\ref{theorem:inexact-minimization}}
\label{sec:asympt-conv-under-1}

  Consider the quantity $\nabla_y \calL_{\eta^k}(y^k, \theta^{k-1})$
  for any iteration $k \geq 1$. Using equation~\ref{eq:6} we have,
  \begin{align*}
    &\nabla_y \calL_{\eta^k}(y^k, \theta^{k-1}) = \nabla_y f(y^k) + \\
    & \frac{\theta_1^{k-1}\exp(\frac{g_1(y^k)}{\eta^k})\nabla_y
      g_1(y^k) + \theta_2^{k-1}\exp(\frac{g_2(y^k)}{\eta^k})\nabla_y
      g_2(y^k)}{\theta_1^{k-1}\exp(\frac{g_1(y^k)}{\eta^k}) +
      \theta_2^{k-1}\exp(\frac{g_2(y^k)}{\eta^k})} \\
    &= \nabla_y f(y^k) + \\
    &\frac{\theta_1^{k-1}\exp(\frac{g_1(y^k)}{\eta^k})}{\theta_1^{k-1}\exp(\frac{g_1(y^k)}{\eta^k}) +
      \theta_2^{k-1}\exp(\frac{g_2(y^k)}{\eta^k})} \nabla_y g_1(y^k) +
    \\
    &\frac{\theta_2^{k-1}\exp(\frac{g_2(y^k)}{\eta^k})}{\theta_1^{k-1}\exp(\frac{g_1(y^k)}{\eta^k}) +
      \theta_2^{k-1}\exp(\frac{g_2(y^k)}{\eta^k})} \nabla_y g_2(y^k)
  \end{align*}
  Let us denote $\lambda_k = \frac{\theta_1^{k-1}\exp(\frac{g_1(y^k)}{\eta^k})}{\theta_1^{k-1}\exp(\frac{g_1(y^k)}{\eta^k}) +
      \theta_2^{k-1}\exp(\frac{g_2(y^k)}{\eta^k})}$, then it is easy
    to see that $1 - \lambda_k = \frac{\theta_2^{k-1}\exp(\frac{g_2(y^k)}{\eta^k})}{\theta_1^{k-1}\exp(\frac{g_1(y^k)}{\eta^k}) +
      \theta_2^{k-1}\exp(\frac{g_2(y^k)}{\eta^k})}$. Then we can
    rewrite, the above equation as,
    \begin{align}
      \label{eq:18}
      \nabla_y \calL_{\eta^k}(y^k, \theta^{k-1}) = \nabla_y f(y^k) &+
      \lambda_k\nabla_y g_1(y^k) +\nonumber\\
      &(1 - \lambda_k) \nabla_y g_2(y^k)
    \end{align}
    Take the limit of the above equation as $k \rightarrow \infty$,
    \begin{align}
      \label{eq:19}
      \lim_{k \rightarrow \infty} \nabla_y \calL_{\eta^k}(y^k, \theta^{k-1}) = \nabla_y f(y^*) &+
      \lim_{k \rightarrow \infty}\lambda_k\nabla_y g_1(y^*) +\nonumber\\
      &(1 - \lim_{k \rightarrow \infty}\lambda_k) \nabla_y g_2(y^*)
    \end{align}
   The hypothesis implies that $\lim_{k \rightarrow \infty} \nabla_y
   \calL_{\eta^k}(y^k, \theta^{k-1}) \rightarrow 0$. Thus all that
   remains to show is that $\lim_{k \rightarrow \infty}\lambda_k$ is
   finite. This is easy to prove. Note that for the limit point $y^*$
   there are one of three possibilities: $g_1(y^*) > g_2(y^*)$,
   $g_1(y^*) < g_2(y^*)$, or $g_1(y^*) = g_2(y^*)$. We will show the
   argument for one of these possibilities and the other two are very
   similar. Assume $g_1(y^*) > g_2(y^*)$ then we have that
   \begin{align*}
     \lim_{k \rightarrow \infty} \lambda_k &= \lim_{k \rightarrow \infty}
     \frac{\theta_1^{k-1}\exp(\frac{g_1(y^k)}{\eta^k})}{\theta_1^{k-1}\exp(\frac{g_1(y^k)}{\eta^k})
     + \theta_2^{k-1}\exp(\frac{g_2(y^k)}{\eta^k})} \\
     &= \lim_{k \rightarrow \infty} \frac{1}{1 +
       \frac{\theta^{k-1}_2}{\theta_1^{k-1}}\exp(\frac{g_2(y^k) -
       g_1(y^k)}{\eta^k})} \\
     &= \frac{1}{1 + \lim_{k \rightarrow \infty}
       \frac{\theta_2^{k-1}}{\theta_1^{k-1}} \exp(\frac{g_2(y^*) -
       g_1(y^*)}{\lim_{k \rightarrow \infty} \eta^k})} \\
     &= 1
   \end{align*}
   The last equality is obtained using the fact that $g_1(y^*) >
   g_2(y^*)$ and $\eta^k \rightarrow 0$. Similarly, we can prove that
   $\lim_{k \rightarrow \infty} \lambda_k = 0$ when $g_1(y^*) <
   g_2(y^*)$, and $\lim_{k \rightarrow \infty} \lambda_k = \lim_{k
     \rightarrow \infty} \frac{\theta_1^{k-1}}{\theta^{k-1}_1 +
     \theta_2^{k-1}} = \theta_{1}^{k-1}$ when $g_1(y^*) = g_2(y^*)$. Note that the
   sequence $\{\theta^k\}$ is bounded as they all lie in a simplex
   $\Delta_2$, thus $\lim_{k
     \rightarrow \infty} \theta_1^{k-1}$ is finite and lies in $[0, 1]$. Hence, we have
   that there exists some $\lambda \in [0, 1]$ such that for every
   limit point $y^*$ of the sequence $\{y^k\}$ satisfying the
   assumptions in the theorem,
   \begin{equation}
     \label{eq:20}
     \nabla_y f(y^*) + \lambda\nabla_y g_1(y^*) + (1 - \lambda)
     \nabla_y g_2(y^*) = 0
   \end{equation}
   \QEDA

\end{document}